\newtheorem{theorem}{Theorem}[section]
\newtheorem{lemma}[theorem]{Lemma}
\newcommand{\RR}{\mathbb R}
\newcommand{\EE}{\mathbb E}
\begin{document}

\title{Bayesian Free Energy of Deep ReLU Neural Network in Overparametrized Cases}

\author{Shuya Nagayasu and Sumio Watanabe\\
Department of Mathematical and Computing Science\\
Tokyo Institute of Technology,\\
Mail-Box W8-42, 2-12-1, Oookayama, \\
Meguro-ku, Tokyo,
152-8552, Japan
}

\date{}

\maketitle

\begin{abstract}
In many research fields in artificial intelligence, 
it has been shown that deep neural networks
are useful to estimate unknown functions on high dimensional input spaces. 
However, their generalization performance 
is not yet completely clarified from the theoretical point of view 
because they are nonidentifiable and singular learning machines. 
Moreover, a ReLU function is not differentiable, to which algebraic or analytic 
methods in singular learning theory cannot be applied. 

In this paper, we study a deep ReLU neural network in overparametrized cases 
and prove that the Bayesian free energy, which is equal to the minus log marginal likelihood
or the Bayesian stochastic complexity, 
is bounded even if the number of layers are larger than necessary to 
estimate an unknown data-generating function. Since the Bayesian generalization error is equal to the 
increase of the free energy as a function of a sample size, our result also shows that
the Bayesian generalization error does not increase even if a deep ReLU neural
network is designed to be sufficiently large or in an opeverparametrized state. 
\end{abstract}

\section{Introduction}

Deep neural networks are now being used in many fields,
for example, pattern recognition, robotic control, bioinformatics, data science, 
time series prediction, and so on. Their high performances have been shown in many 
experiments, however mathematical foundation to study them is not yet completely 
established. 

It is well known that the generalization of machine learning is basically divided into two elements, Bias and Variance. 
The bias and the variance are determined by approximation ability and  the complexity of the model respectively. 
If a machine leaning model has larger number of parameter, the bias gets smaller and the variance gets larger in general. 
It is known as Bias-Variance Tradeoff.
Nevertheless deep neural network in practical use has huge number of parameters, they are well generalized and the influence of variance is smaller comparing to such a number of parameters. The reason of this phenomena is not clear, because deep neural networks are nonidentifiable and singular\cite{Watanabe2001b} \cite{Watanabe2007}.

A learning machine is called identifiable if a map from a parameter to a probability 
distribution is one-to-one, and regular if it has a positive-definite Fisher information 
matrix. If a learning machine is identifiable and regular, then the
regular statistical theory holds \cite{Amari1993}, resulting that asymptotic normality of
the maximum likelihood estimator holds and that 
AIC \cite{Akaike1974} and BIC \cite{Schwarz1978} can be employed
in model selection problems. However, if 
learning machines contain hierarchical structure or hidden variables, 
they are nonidentifiable and singular, for example, 
layered neural networks \cite{Watanabe2001a,Aoyagi2012}, 
normal mixtures \cite{Hartigan1985,Yamazaki2003}, 
matrix factorizations \cite{Nakajima2010}, reduced rank regressions \cite{Aoyagi2005}, 
Poisson mixtures \cite{Sato2019}, latent Dirichlet allocation \cite{Hayashi2021}, 
and Boltzmann machine \cite{Yamazaki2005,Aoyagi2010}.
In these learning machines, it was shown that 
singularities make the Bayesian generalization errors smaller than that of regular 
models, even in under-parametrized cases \cite{Nagayasu2022}.
These phenomena are called implicit or intrinsic regularization
\cite{Nakajima2010,Wei2022,Tokuda2022}, and researches about
quantitative effects caused by singularities have been applied to model selection 
\cite{Drton2017}, hyperparameter design \cite{Yamazaki2013}, and optimization of 
Markov chain Monte Carlo method for the posterior distribution \cite{Nagata2008}. 

Statistical properties of nonidentifiable and singular learning machines 
are now being clarified by these researches, however, the conventional singular 
learning theory is based on the condition that a log likelihood is
an algebraic or analytic function of a parameter, hence it cannot be applied to 
non-differentiable ReLU neural networks. In this paper, we study 
singular learning theory so as to be employed in the non-differentiable cases and derive 
the upper bound of the free energy of deep ReLU functions
 in overparametrized cases. 

The Bayesian free energy $F_n$ is mathematically equal to the minus log marginal likelihood
and Bayesian stochastic complexity, where $n$ is a sample size. 
The free energy plays an important role in Bayesian learning theory, because 
the average generalization error $\EE[G_n]$ 
satisfies a formula  for an arbitrary $n$ \cite{Watanabe2009},
\[
\EE[G_n]=\EE[F_{n+1}]-\EE[F_n]-S, 
\]
where $G_n$ is the Kullback-Leibler divergence from a data-generating 
distribution to the Bayesian predictive distribution and 
$S$ is the entropy of the data-generating distribution. 
If a log likelihood function is analytic or algebraic, then 
by the conventional singular learning theory, 
it was proved that, if a data-generating distribution is realizable by 
a learning machine, 
\[
\EE[F_n]=nS+\lambda \log n + O(n\log n),
\]
where $\lambda>0$ is the real log canonical threshold (RLCT), 
resulting that 
\[
\EE[G_n]=\frac{\lambda}{n}+o(1/n).
\]
In this paper, we prove that the Bayesian free energy of deep 
ReLU neural network satisfies the following inequality
\[
\EE[F_n]\leq nS + \lambda_{ReLU} \log n + C
\]
where the constant $\lambda_{ReLU}>0$ is bounded even if the number of 
layers in a learning machine is larger than necessary to estimate 
the data-generating distribution. 
Hence, if the generalization error has asymptotic expansion, 
then it should be 
\[
\EE[G_n] \leq \frac{\lambda_{ReLU}}{n}+o(1/n). 
\]
In practical applications, we 
do not know the appropriate number of layers of a deep network for unknown 
data-generating distribution, hence a sufficiently large neural network
is often employed. Our result shows that, 
even if a deep ReLU neural network is designed 
in an over-parametrized state, the Bayesian generalization error 
is bounded by a constant defined by the data-generating distribution. 

This paper consists of seven sections. In the second section, we prepare 
mathematical framework of Bayesian learning theory.
In the third section, a deep ReLU neural network is explained and
 the main theorem is introduced. In the fourth and fifth sections, several lemmas 
 and the main theorem are proved respectively. 
In the sixth and seventh sections, we discuss the main 
result and conclude this paper. 

\section{Framework of Bayesian Learning theory}

In this section, we briefly explain the framework of Bayesian Learning for supervised learning used in this paper. 

Let $X^n = (X_1,\cdots X_n) and Y^n = (X_1,\cdots X_n)$ be samples independently and identically taken from the true probability distribution $q(x,y) = q(y|x)q(x)$. Also let $p(y|x,\theta), \varphi(\theta)$ be the statistical model and prior distribution. The posterior distribution is given by

\begin{align}
p(\theta|X^n,Y_n) = \frac{1}{Z(Y^n|X^n)}\varphi(\theta)\prod_{i=1}^{n}p(Y_i|X_i, \theta).
\end{align}
where $Z_n$ is normalizing constant denoted as marginal likelihood: 
\begin{align}
Z_n = \int \varphi(\theta)\prod_{i=1}^{n}p(Y_i|X_i,\theta)\mathrm{d}\theta.
\end{align}
Hence $Z_n$ is the probability distribution for $Y^n$ conditioned by $X^n$ estimated from samples, $Z_n$ is denoted by $p(Y^n|X^n)$ as probability distribution. The posterior predictive distribution is defined by the average of statistical model over posterior distribution:
1\begin{align}
p(y|x,X^n,Y_n) = \int p(y|x,\theta)p(\theta|X^n,Y^n)\mathrm{d}\theta.  
\end{align}
In Bayesian Learning the true distribution $q(x)$ is inferred by predictive distribution $p(x|X^n)$. For comparing $q(x)$ with $p(x|X^n)$, the free energy$F_n$ and the generalization loss $G_n$ are used.

The free energy is negative log value of marginal likelihood
\begin{align}
F_n = -\log Z_n.
\end{align}
The free energy is also called evidence, stochastic complexity and used for model selection\cite{Schwarz1978}\cite{Rissanen_1978}\cite{S_Watanabe2013}\cite{Drton2017}, We introduce the following variables to explain why the free energy is used for comparing true and predictive distributions.
\begin{align}
S &= -\int q(y|x)q(x) \log q(y|x) \mathrm{d}x\mathrm{d}y, \\
S_n &= \frac{1}{n}\sum_{i = 1}^{n} \log q(Y_i|X_i).
\end{align}
The entropy$S$ is the average negative log likelihood of the true distribution and empirical entropy$S_n$ is average of log loss. From definition, the following equation holds
\begin{align}\label{eq:111}
F_n = nS_n + \log \frac{q(Y^n|X^n)}{p(Y^n|X^n)}
\end{align}
where $q(Y^n|X^n) = \prod_{i = 1}^{n}q(Y_i|X_i)$. The average of $F_n$ over the sample generating $(X^n,Y^n)$ is
\begin{align}
\EE[F_n] &= nS + \int q(y^n|x^n)q(x^n) \log \frac{q(y^n|x^n)q(x^n)}{p(y^n|x^n)q(x^n)}\mathrm{d}x^n\mathrm{d}y^n \\
&= nS + D_{\mathrm{KL}}(q(x^n,y^n)|p(y^n|x^n)q(x^n))
\end{align}
The entropy is not depends on statistical model and prior distribution, therefore the expected value of free energy over sample generating  is equivalent to the Kullback Leibler divergence between true distribution for sample size and evidence except for the constant.

The generalization error is the Kullback Leibler divergence between true distribution and predictive distribution.
\begin{align}
G_n = D_{\mathrm{KL}}(q(y|x)q(x)|p(y|x,X^n,Y^n)q(x))
\end{align}
The generalization error is also used for model selection\cite{Akaike1974}\cite{S_Watanabe2010a}. 
For $p(y|x,X^n,Y^n) = p(Y_{n+1}|X_{n+1},X^n,Y_n)$, the following equation holds
\begin{align}
p(y|x,X^n,Y^n) &= \frac{1}{Z_n}\int p(Y_{n+1}|X_{n+1},\theta)\varphi(\theta)\prod_{i=1}^{n}p(Y_i|X_i,\theta)\mathrm{d}\theta\\
&= \frac{Z_{n + 1}}{Z_n}. \label{eq:newdata}
\end{align} The average of a sample of negative log of equation \eqref{eq:newdata} is following
\begin{align}
\EE[G_n] - S = \EE[F_{n+1}] -\EE[F_{n}].
\end{align}
This equation show that the average generalization error is equal to the increase of the average 
free energy. If the log likelihood is an algebraic or analytic function, 
by using algebraic geometric foundation \cite{Hironaka1964}\cite{Atiyah1970},
it was proved that asymptotic behaviors of $\EE[F_n]$ and $\EE[G_n]$ are 
given by the real log canonical threshold \cite{Watanabe2009}. 
However, since a ReLU function is neither algebraic nor analytic, conventional 
singular learning theory cannot be employed. 

Table \ref{table:average} show the definitions and notations used in this paper.

\begin{table}[h]
 \caption{Notation}
 \label{table:average}
 \centering
  \begin{tabular}{cll}
   \hline
   Notation & Definition & Name \\
   \hline \hline
   $\EE[\cdots]$ & $ \int \cdots \prod_{i=1}^{n}q(Y_i|X_i)q(X_i)\mathrm{d}X^n\mathrm{d}Y^n$ & average of generating of samples  \\
   $\EE_\theta[\cdots]$ & $ \int \cdots p(\theta|X^n)\text{d}w $ & average of posterior \\
   $\EE_{x,y}[\cdots]$ & $ \int \cdots q(y|x)q(x)\text{d}x $ & average of true distribution \\
   $S$ & $ -\EE_{x,y}[\log q(y|x)q(x)]$ & entropy \\
   $S_n$ & $-\frac{1}{n}\sum_{i=1}^{n}\log q(X_i)$ & the empirical entropy \\
   $F_n$ & $-\log Z_n$ & the free energy \\
   $G_n$ & $\EE_{x,y}[\log q(y|x)/p(y|x,X^n,Y^n)]$ & the generalization error \\
   \hline
  \end{tabular}
\end{table}

\section{Deep Neural Network}
\subsection{Deep Neural Network as statistical model}
 In this section we describe the function of $N$-layer neural network. We define $H_{k} (1 \leq k \leq N) $ as the width of each layer.  Let $x \in \RR^{H_{1}}$ be input vector. $\sigma(t) = \{\sigma_i(t)\}$ is vector of activation function. The function from $k-1$-th layer to $k $-th layer $f^{(k)} \in \RR^{H_{k}}$ is defined by
\begin{align}
&f^{(1)}(w,b,x) = x \\
&f^{(k)}(w,b,x) = \sigma(w^{(k)}f^{(k-1)}(w,b,x) + b^{(k)}) \;\;\; (2 \leq i \leq N - 1)
\end{align}
where $w^{(k)} \in \RR^H_{k+1} \times R^H_{k}$ is weight matrix, $b^{(k)} \in \RR^{H_{k}}$ is bias. We collectively denote weight and bias as $w, b$ 
\begin{align}
w = (w^{(2)}, \cdots, w^{(N)}),
b = (b^{(2)}, \cdots, b^{(N)}),
\end{align}
There exists various activation functions for neural network such that ReLU, Sigmoid, Swish and so on. In this paper we analyze the case using ReLU active function which is defined by

\begin{align}
\sigma_i(t) = 
\begin{cases}
t_i & (t_i \leq 0) \\
0 &  (t_i < 0)
\end{cases}
\end{align}

For using Neural Network in Bayesian Learning, the relationship between input vector and output vector is stochastically modeled with the function $f^{(N)}(w,b,x)$. Let $y \in R^{H_{N}}$ be output vector. This paper concern the following statistical model:
\begin{align}
y = f^{(N)}(w,b,x) + N(0,I_{H_{N}}),
\end{align}
where $N(0,I_{H_{N}})$ is $H_{N}$ dimensional Gaussian noise which covariance is the identity matrix.  This model is represented as probability density function as follows:
\begin{align}
p(y|w,b,x) =  \frac{1}{\sqrt{2 \pi }^{H_{N}}}-\exp \frac{1}{2}||y - f^{(N)}(w,b,x)||^2.
\end{align}

\subsection{How the model realize the true}
In data analysis using neural network, the case that the model is larger than the data generating process is common. Such situation is called overparametrize. For analyzing such a overparameterized situation, we assume that the statistical model include the data generating process. 

We assume that the true probability distribution is $N^{*}$-layer ReLU neural network which has  $H^*_{k}$ width with parameters$(w^*,b^*)$. In this situation, the true distribution is 
\begin{align}
q(y|x) =  \frac{1}{\sqrt{2 \pi }^{H^*_{N}}}-\exp \frac{1}{2}||y - f^{(N^{*})}(w^*,b^*,x)||^2.
\end{align}
We show that if statistical model is a $N$-layer ReLU neural network that satisfies
\begin{align}\label{eq:cond1}
N^* \leq N,\;\;\;\;
H_1^*=H_1, \;\;\;\; H_{N^*}^*=H_N,
\end{align}
and
\begin{align}
H_k^* & \leq H_k\;\;\;(2\leq k\leq N^*-1), \label{eq:cond2}
\\
H_{N^*-1}^*& \leq H_{k}\;\;(N^*\leq k\leq N-1).\label{eq:cond3}
\end{align}
there exists a parameter $\hat{w}, \hat{b}$ which satisfies 
\begin{align}
q(y|x) = p(y|\hat{w}, \hat{b},x).
\end{align}
Such parameters are called optimal parameter. 

To describe optimal parameter, we define the following notation.
{\bf Notation}. 
Note that the dimension of the vector $f^{(k)}(w,b,x)$
 is different from $f^{(k)}(w^*,b^*,x)$ in general. 
A vector $v^{(k)}\in \RR^{H_k}$, which has the same dimension as the output vector of the $k$th layer
of the learning machine, is represented by 
\[
v^{(k)}=\left(
\begin{array}{c}
v_A^{(k)}
\\
v_B^{(k)}
\end{array}\right),
\]
where, for $1\leq k\leq N^*-1$, 
\[
v_A^{(k)}\in \RR^{H^*_k},\;\;\;v_B^{(k)}\in \RR^{H_k-H^*_k},
\]
or, for $N^*\leq k\leq N-1$, 
\[
v_A^{(k)}\in \RR^{H^*_{N^*-1}},\;\;\;v_B^{(k)}\in \RR^{H_k-H^*_{N^*-1}}.
\]
For example, the output of the $k$th layer is represented by 
\[
f^{(k)}(w,b,x)=\left(
\begin{array}{c}
f_A^{(k)}(w,b,x)
\\
f_B^{(k)}(w,b,x)
\end{array}\right),
\]
and the bias of the $k$th layer is represented by 
\[
b^{(k)}=\left(
\begin{array}{c}
b_A^{(k)}
\\
b_B^{(k)}
\end{array}\right). 
\]
A matrix $L^{(k)}\in \RR^{H_k\times H_{k-1}} $ whose size is equal to
the weight parameter from the $(k-1)$th layer to the $k$th layer is represented by 
\[
L^{(k)}=\left(
\begin{array}{cc}
L_{AA}^{(k)} & L_{AB}^{(k)} 
\\
L_{BA}^{(k)} & L_{AB}^{(k)} 
\end{array}\right),
\]
where, for $2\leq k\leq N^*-1$, 
\begin{eqnarray*}
&L_{AA}^{(k)}\in \RR^{H^*_k\times H^*_{k-1}}, &
L_{AB}^{(k)}\in \RR^{H^*_k\times (H_{k-1}- H^*_{k-1})}, 
\\
&L_{BA}^{(k)}\in \RR^{(H_k-H^*_k)\times H^*_{k-1}}, &
L_{BB}^{(k)}\in \RR^{(H_k-H^*_k)\times (H_{k-1}- H^*_{k-1})}, 
\end{eqnarray*}
for $N^*\leq k\leq N-1$, 
\begin{eqnarray*}
&L_{AA}^{(k)}\in \RR^{H^*_{N^*-1}\times H^*_{N^*-1}}, &
L_{AB}^{(k)}\in \RR^{H^*_{N^*-1}\times (H_{k-1}- H^*_{N^*-1})}, 
\\
&L_{BA}^{(k)}\in \RR^{(H_k-H^*_{N^*-1})\times H^*_{N^*-1}}, &
L_{BB}^{(k)}\in \RR^{(H_k-H^*_{N^*-1})\times (H_{k-1}- H^*_{N^*-1})}, 
\end{eqnarray*}
or, for $k=N$,
\begin{eqnarray*}
&L_{AA}^{(N)}\in \RR^{H^*_{N^*}\times H^*_{N^*-1}}, &
L_{AB}^{(N)}\in \RR^{H^*_{N^*}\times (H_{N-1}- H^*_{N^*-1})}.
\end{eqnarray*}
For example, a weight parameter is represented by
\[
w^{(k)}=\left(
\begin{array}{cc}
w_{AA}^{(k)} & w_{AB}^{(k)} 
\\
w_{BA}^{(k)} & w_{BB}^{(k)} 
\end{array}\right). 
\]
Note that, $L_{AA}^{(2)}$, $L_{BB}^{(2)}$, $L_{BA}^{(N)}$, and $L_{BB}^{(N)}$ are
the empty matrices, because $H^*_1=H_1$ and $N^*_{N^*}=H_N$. 

We divide the layers of the model into $1 \leq k \leq N^*-1$ and $N^* \leq k \leq  N$. 
In $k = 2$, the optimal parameter $\hat{w}, \hat{b}$ are

\begin{eqnarray}
\hat{w}^{(2)} &= \left(
\begin{array}{c}
w^{*(2)}  
\\
{\cal M}^{*(2)}_{BA}
\end{array}\right)\\
\hat{b}^{(2)} &= \left(
\begin{array}{c}
b^{*(2)}  
\\
-{\cal M}^{*(2)}_{B0}
\end{array}\right),
\end{eqnarray}
where ${\cal M}^{*(2)}_{BA}$is arbitrary matrix which components are positive and $-{\cal M}^{*(2)}_{B0}$is arbitrary vector which components are negative. 

In $3 \leq k \leq N^*-1$, the optimal parameter $\hat{w}, \hat{b}$ are  
\begin{eqnarray}
\hat{w}^{(k)} &= \left(
\begin{array}{cc}
w^{*(k)}  & {\cal M}^{*(k)}_{AB}
\\
-{\cal M}^{*(k)}_{BA} & {\cal M}^{*(k)}_{BB}
\end{array}\right)\\
\hat{b}^{(k)} &= \left(
\begin{array}{c}
b^{*(k)}  
\\
-{\cal M}^{*(k)}_{B0}
\end{array}\right),
\end{eqnarray}
where ${\cal M}^{*(k)}_{AB}, {\cal M}^{*(k)}_{BB}$ are arbitrary matrices, $-{\cal M}^{*(k)}_{BA}$ are arbitrary matrices which components are negative and $-{\cal M}^{*(k)}_{B0}$ are arbitrary vector which components are negative. For $k = 3$, ${\cal M}^{*(k)}_{AB} = 0$. In each layer$(k \geq 2)$ the output $f^{(k)}(\hat{w},\hat{b},x)$ are positive. From this positivity, the following equation holds in $3 \leq k \leq N^*-1$  
\begin{align}
f_A^{(k)}(\hat{w},\hat{b},x) &= f^{(k)}(w^*,w^*,x), \\
f_B^{(k)}(\hat{w},\hat{b},x) &= 0.
\end{align}
Figure\ref{fig_op1} shows the relationships between units in this optimal parameter in $3 \leq k \leq N^*-1$.
\begin{figure}[htbp]
  \begin{minipage}[b]{5cm}
    \centering
    \raisebox{8mm}{\includegraphics[keepaspectratio, scale=0.31]{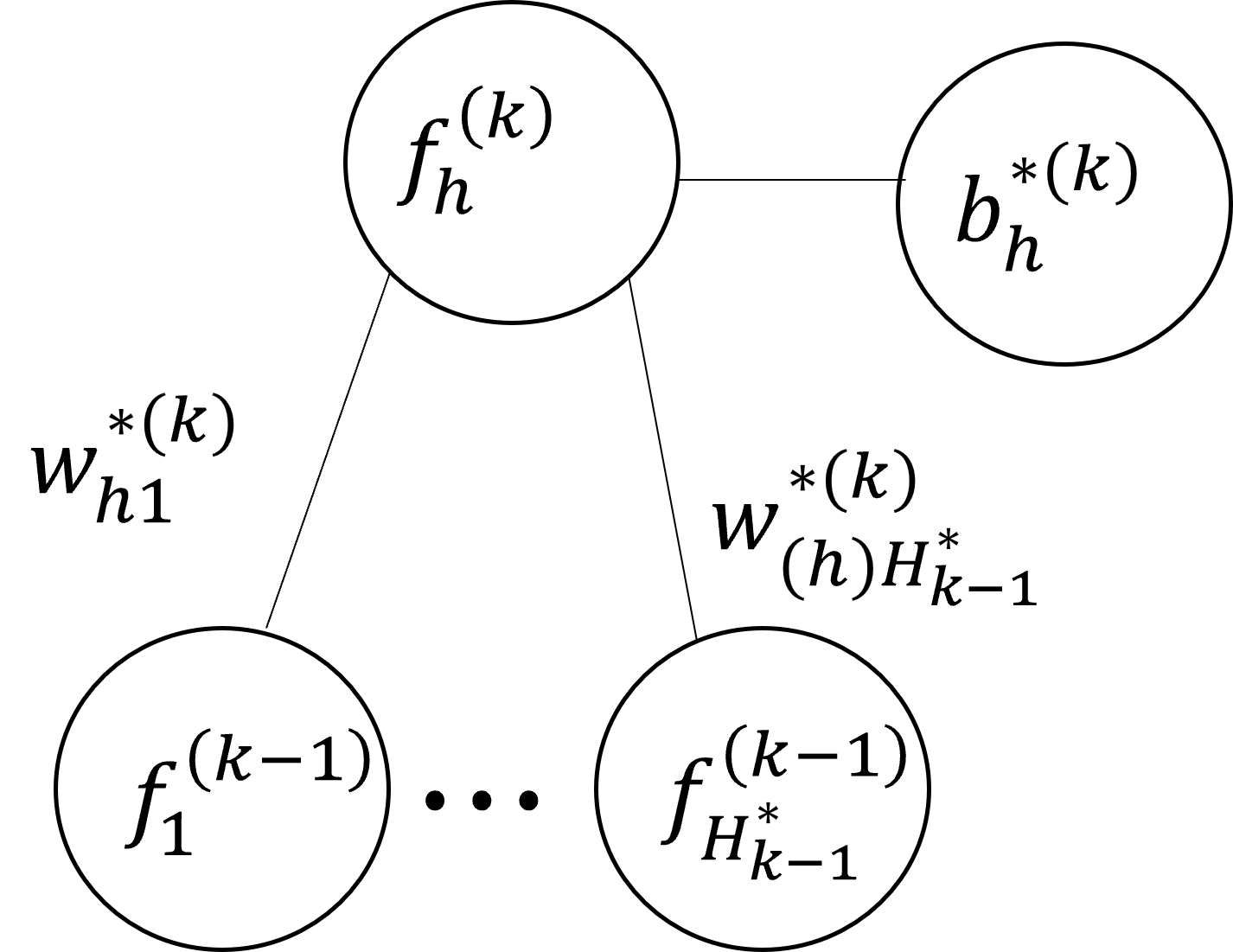}}
    \subcaption{true}
  \end{minipage}
  \begin{minipage}[b]{5cm}
    \centering
    \includegraphics[keepaspectratio, scale=0.3]{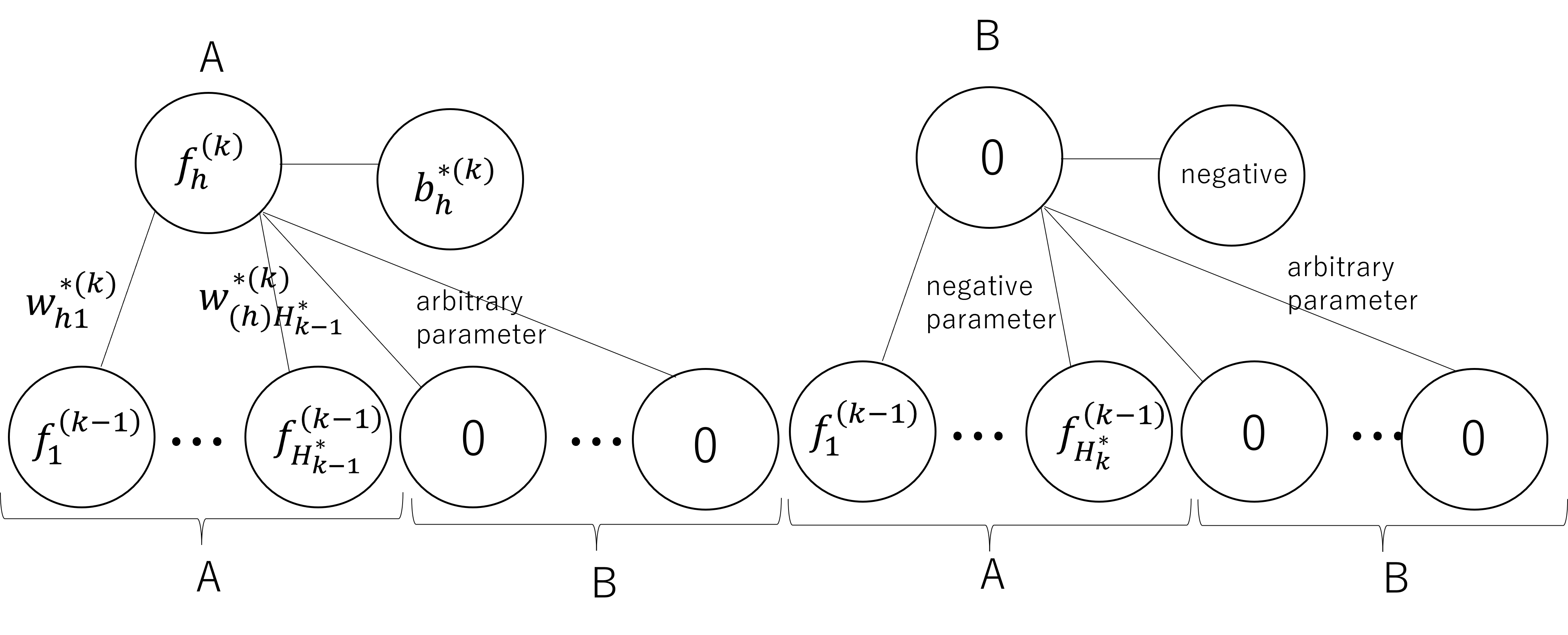}
    \subcaption{optimal}
  \end{minipage}
  \caption{The relationship between true distribution and optimal parameter in the model}
  \label{fig_op1}
\end{figure}

In $N^* \leq k \leq N - 1$, the optimal parameters $\hat{w}, \hat{b}$ are 
\begin{eqnarray}
\hat{w}^{(k)} &= \left(
\begin{array}{cc}
I_{N^*-1}  & {\cal M}^{*(k)}_{AB}
\\
-{\cal M}^{*(k)}_{BA} & -{\cal M}^{*(k)}_{BB}
\end{array}\right), \\
\hat{b}^{(k)} &= \left(
\begin{array}{c}
{\cal M}^{*(k)}_{A0} 
\\
-{\cal M}^{*(k)}_{B0}
\end{array}\right),
\end{eqnarray}
where $I_{N^*-1}$ is $H^*_{N^*-1}$ dimensional identity matrices, ${\cal M}^{*(k)}_{AB}$ are arbitrary matrices, $-{\cal M}^{*(k)}_{BA}, -{\cal M}^{*(k)}_{BB}$ are arbitrary matrices which components are negative ,${\cal M}^{*(k)}_{A0}$ are arbitrary vectors which components are positive and $-{\cal M}^{*(k)}_{B0}$ are  arbitrary vectors which components are negative. $\hat{w}^{(k)}$ satisfy that $\mathrm{Rank}(\hat{w}^{(k)})\geq H^*_{N^* - 1}$ In $k = N$, the optimal parameters $\hat{w}, \hat{b}$ are 
\begin{align}
\hat{w}^{(N)} &= \left(w^{*(N^*)}, {\cal M}^{*(N)}_{AB}\right),\\
\hat{b}^{(N)} &= b^{*(N^*)} - w^{*(N^*)}\sum_{k = N^*}^{N-1}{\cal M}^{*(k)}_{A0} ,
\end{align}
where ${\cal M}^{*(N)}_{AB}$ is an arbitrary matrix.
From the positivity of output in each layer, the following equation holds in $N^* \leq k \leq N - 1$,
\begin{align}
f_A^{(k)}(\hat{w},\hat{b},x) &= f_A^{(k - 1)}(\hat{w},\hat{b},x) + {\cal M}^{*(k)}_{A0} \\
f_B^{(k)}(\hat{w},\hat{b},x) &= 0.
\end{align}
Therefore, the following equation holds
\begin{align}
f^{(N)}(\hat{w},\hat{b},x) = f^{(N^*)}(w^*,b^*,x).
\end{align}
This equation is equivalent to 
\begin{align}
q(y|x) = p(y|\hat{w}, \hat{b},x).
\end{align}
Figure\ref{fig_nn} shows the outline of the optimal parameter introduced here.
\begin{figure}
   \centering
   \includegraphics[keepaspectratio, scale=0.3]{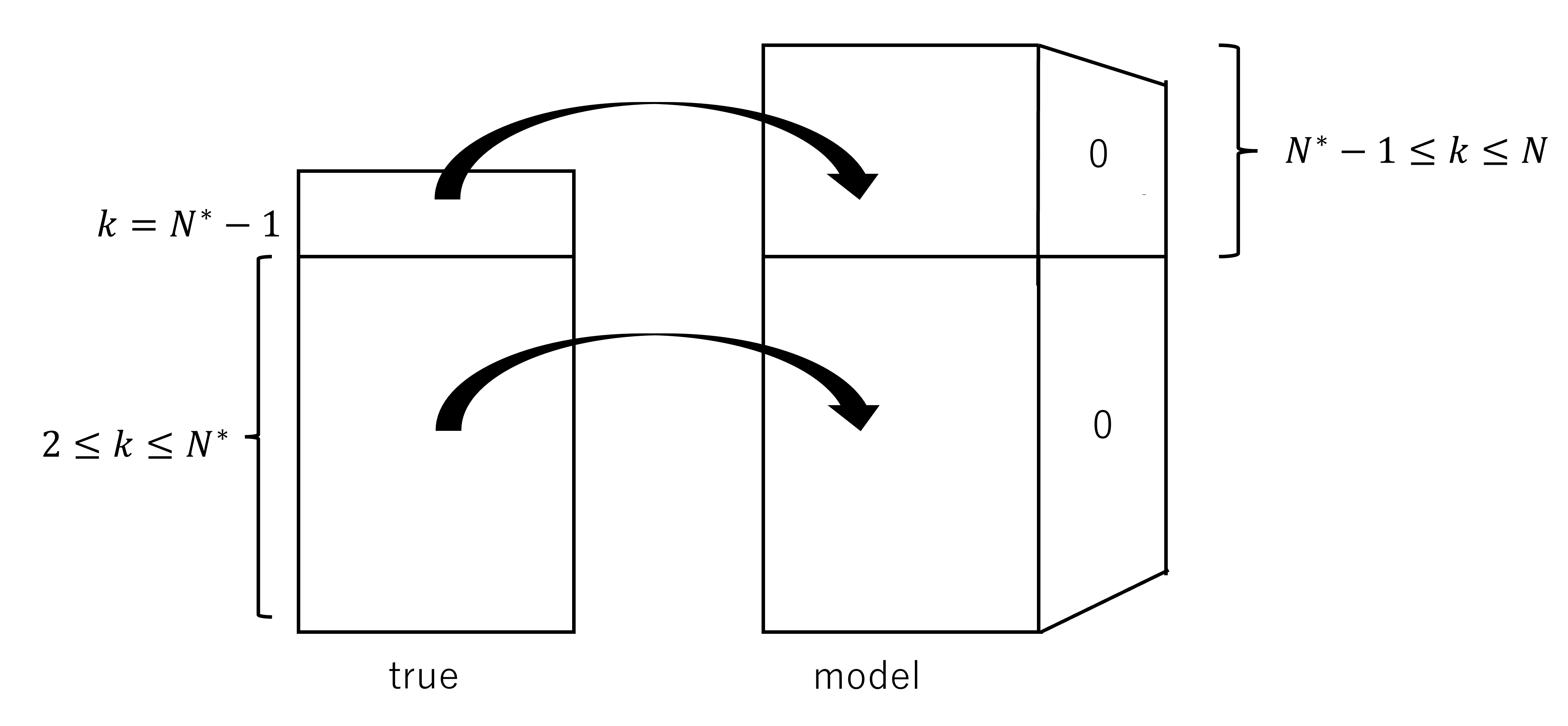}
   \centering
  \caption{Outline of optimal parameter}
  \label{fig_nn}
\end{figure}
Other than this optimal parameter, there exists various optimal parameters.

\subsection{Main Theorem}

In this subsection the main result of this paper is introduced.

\begin{theorem}(Main Theorem)\label{theorem:111}
Assume that the learning machine and the data generating distribution 
are given by $p(y|x,w,b)$ and $q(y|x)=p(y|x,w^*,b^*)$ which satisfy the conditions
eq.(\ref{eq:cond1}), eq.(\ref{eq:cond2}), and eq.(\ref{eq:cond3}), and that a sample 
$\{(X_i,Y_i)\;\;i=1,2,...,n\}$ is independently subject to $q(x)q(y|x)$. Then 
the average free energy satisfies the inequality, 
\[
\EE[F_n]\leq nS+ \lambda_{ReLU} \log n +C.
\]
For general cases, 
\begin{align}\label{eq:lambda1}
\lambda_{ReLU}=\frac{1}{2}
\left(H^*_3(H_2-H^*_2)+ \sum_{k=2}^{N^*} H^*_{k} (H_{k-1}^*+1)
\right).
\end{align}
If the support of the input distribution is bounded or contined in 
nonnegative region, 
\begin{align}\label{eq:lambda2}
\lambda_{ReLU}=\frac{1}{2}
\left(\sum_{k=2}^{N^*} H^*_{k} (H_{k-1}^*+1)
\right).
\end{align}
These results show that the average free energy is bounded, 
even if the number of layers are larger than necessary to estimate
the data-generation network. In particular eq.\eqref{eq:lambda1}
is equal to the half the number of parameters in the data-generating network.

\end{theorem}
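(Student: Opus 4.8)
The plan is to bound the free energy from above by bounding the marginal likelihood $Z_n$ from below, following the general philosophy of singular learning theory but replacing the resolution of singularities, which is unavailable for the non-analytic ReLU, by an explicit construction of a neighborhood of optimal parameters. Starting from $\EE[F_n]=nS+\EE[-\log \bar Z_n]$ with $\bar Z_n=\int \varphi(\theta)\exp(\sum_{i=1}^n \log(p(Y_i|X_i,\theta)/q(Y_i|X_i)))\,d\theta$, I would first use the Gaussian form of the model to write the log-likelihood ratio explicitly. Writing $g(\theta,x)=f^{(N)}(w,b,x)-f^{(N^*)}(w^*,b^*,x)$ and $Y_i=f^{(N^*)}(w^*,b^*,X_i)+\epsilon_i$, the exponent becomes $-\frac12\sum_i\|g(\theta,X_i)\|^2+\sum_i\langle \epsilon_i,g(\theta,X_i)\rangle$, so that controlling $\bar Z_n$ reduces to controlling how the function deviation $g$ and the noise pairing behave on a neighborhood of the optimal parameter set.

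Next I would lower-bound $\bar Z_n$ by discarding all of parameter space except a small box $\mathcal{U}$ around the explicit optimal parameter $(\hat w,\hat b)$ constructed in Section 3.2, and then split the coordinates of $\mathcal{U}$ into two groups. The first group consists of the directions that genuinely move the output function; by the layerwise construction these are the weights and biases feeding the ``active'' sub-network that reproduces the true network, together with, in the general (possibly negative and unbounded) input case, the couplings that the input support can no longer switch off. Counting these directions layer by layer is what produces the quantities $H_k^*(H_{k-1}^*+1)$ for $2\le k\le N^*-1$, the output term $H^*_{N^*}(H^*_{N^*-1}+1)$, and the additional $H_3^*(H_2-H_2^*)$ term present only when the input distribution can take arbitrarily large negative values. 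The second group consists of the ``inactive'' directions: perturbations of the weights and negative biases of the units that the construction clamps to zero by the ReLU. The key structural fact, which I would isolate as the main lemma, is that over $\mathcal{U}$ these inactive perturbations leave $g$ identically zero, so they contribute only a combinatorial, $n$-independent volume factor to $\bar Z_n$.

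On the active block I would introduce the rescaling $\theta=\hat\theta+u/\sqrt n$ and bound $\|g(\theta,X_i)\|$ by a constant multiple of $\|u\|/\sqrt n$ uniformly over the sample, using Lipschitz continuity of each ReLU layer composed with the boundedness of the parameters on $\mathcal{U}$ and a moment bound on $\|X_i\|$. This turns $\frac12\sum_i\|g\|^2$ into an $O(\|u\|^2)$ quadratic form independent of $n$ after rescaling, while the Jacobian of the substitution supplies the factor $n^{-(\text{active dimension})/2}$; taking the expectation of the mean-zero noise term, which contributes only a bounded correction after a Gaussian completion-of-squares (or Jensen) step, yields $\EE[-\log \bar Z_n]\le \lambda_{ReLU}\log n+C$ with $\lambda_{ReLU}$ equal to half the active dimension, matching \eqref{eq:lambda1} and \eqref{eq:lambda2}.

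I expect the main obstacle to be precisely the lemma that controls $g$ on $\mathcal{U}$ through the non-differentiable ReLU compositions: one must show simultaneously that the inactive units remain exactly off for the entire box $\mathcal{U}$, which is immediate for bounded or nonnegative input but requires a refined case analysis, and the loss of certain free directions, when the input can be arbitrarily large and negative; and that the active part of the network depends on its parameters in a genuinely Lipschitz (not merely directionally differentiable) way, with constants uniform over $\mathcal{U}$ and integrable against $q(x)$. Handling the interface between clamped and active units, where an arbitrarily small perturbation can flip a ReLU, is the delicate point, and it is exactly what forces the distinction between the two expressions for $\lambda_{ReLU}$; everything else is a standard Laplace-type volume estimate.
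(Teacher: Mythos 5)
Your proposal is correct and follows essentially the same route as the paper: restricting the marginal likelihood integral to an explicit neighborhood of the constructed optimal parameter (the paper's essential parameter set $W_E$), splitting coordinates into $1/\sqrt{n}$-scaled ``active'' directions whose count gives $\lambda_{ReLU}$ and ReLU-clamped ``inactive'' directions of $O(1)$ volume, controlling the output deviation by layerwise Lipschitz estimates, and removing the mean-zero noise term by a Jensen-type step (the paper's Lemma 4.1 via Watanabe 2001). Your identification of the $k=3$ coupling term $H_3^*(H_2-H_2^*)$ as the price of inputs that cannot be switched off, and its disappearance for bounded or nonnegative input support, matches the paper's case analysis exactly.
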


\section{Lemmas}

In this section, we prepare several lemmas which are necessary to
prove the main theorem. 

Let the Kullback-Leibler divergence of a data-generating network 
$q(y|x)=p(y|x,w^*,b^*)$ and 
a learning machine $p(y|x)$ be 
\[
K(w,b)=\int q(x)q(y|x)\log\frac{q(y|x)}{p(y|x,w,b)}\mathrm{d}x\mathrm{d}y. 
\]
It is well-known that $K(w,b)\geq 0$ for an arbitrary $(w,b)$ and 
$K(w,b)=0$ if and only if $q(y|x)=p(y|x,w,b)$. 

\begin{lemma} \label{lemma:111}
Assume that a set $W$ is contained in the set determined by 
the prior distribution $\{(w,b);\varphi(w,b)>0\}$. Then for an 
arbitrary postive integer $n$, 
\[
\EE[F_n]\leq nS -\log \int_{W} \exp(-n K(w,b))\varphi(w,b) \mathrm{d}w \mathrm{d}b.
\]
\end{lemma}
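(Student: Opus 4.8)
The plan is to reduce the free energy to an expected log-integral and then apply Jensen's inequality against a cleverly \emph{tilted} probability measure, so that the linear term produced by Jensen cancels the $nK$ term exactly.

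First I would start from $F_n=-\log Z_n$ with $Z_n=\int\varphi(w,b)\prod_{i=1}^n p(Y_i|X_i,w,b)\,dw\,db$. Since the integrand is nonnegative and $\varphi(w,b)>0$ on $W$, restricting the domain of integration to $W$ can only decrease $Z_n$, so $F_n\le -\log\int_W\varphi(w,b)\prod_{i=1}^n p(Y_i|X_i,w,b)\,dw\,db$. Next I factor the true density out of the likelihood, writing $\prod_{i=1}^n p(Y_i|X_i,w,b)=\big(\prod_{i=1}^n q(Y_i|X_i)\big)\exp\big(\sum_{i=1}^n\log\frac{p(Y_i|X_i,w,b)}{q(Y_i|X_i)}\big)$. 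Taking the sample expectation and using $\EE[-\sum_{i=1}^n\log q(Y_i|X_i)]=nS$ gives
\[
\EE[F_n]\le nS-\EE\Big[\log\int_W\varphi(w,b)\exp\Big(\sum_{i=1}^n\log\tfrac{p(Y_i|X_i,w,b)}{q(Y_i|X_i)}\Big)\,dw\,db\Big].
\]

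The core step is to bound the remaining expected logarithm from below. Set $Z=\int_W\varphi(w,b)e^{-nK(w,b)}\,dw\,db$ and introduce the probability measure $d\nu(w,b)=Z^{-1}\varphi(w,b)e^{-nK(w,b)}\,dw\,db$ on $W$. Rewriting the inner integral as $Z\int_W\exp\big(\sum_i\log\frac{p(Y_i|X_i,w,b)}{q(Y_i|X_i)}+nK(w,b)\big)\,d\nu$ and applying Jensen's inequality to the concave function $\log$ against $\nu$, I obtain, for each fixed sample,
\[
\log\int_W\varphi\,\exp\Big(\sum_i\log\tfrac{p}{q}\Big)\ge \log Z+\int_W\Big(\sum_{i=1}^n\log\tfrac{p(Y_i|X_i,w,b)}{q(Y_i|X_i)}+nK(w,b)\Big)\,d\nu.
\]
Then I take the sample expectation of both sides and exchange $\EE$ with $\int_W d\nu$ by Fubini. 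Because $\EE\big[\log\frac{p(Y_i|X_i,w,b)}{q(Y_i|X_i)}\big]=-K(w,b)$ for each $i$, the sum $\sum_i\EE[\log\frac{p}{q}]=-nK(w,b)$ cancels the $+nK(w,b)$ exactly, the integral against $\nu$ vanishes, and $\EE[\log(\cdots)]\ge\log Z$. Combining this with the displayed bound yields $\EE[F_n]\le nS-\log Z$, which is the claim.

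The hard part is really the direction of Jensen: the naive inequality $\EE[\log U]\le\log\EE[U]$ points the wrong way, so the whole argument hinges on choosing the exponentially tilted measure $\nu\propto\varphi\,e^{-nK}$ that makes the first-order term produced by Jensen equal $-nK$ in expectation and thus annihilate the compensating $+nK$. The remaining technical points are routine: justifying the Fubini exchange and the finiteness of $\EE[\log\frac{p}{q}]=-K$, both of which hold for the Gaussian-noise model since the log-likelihood ratio is quadratic in $y$ with finite moments.
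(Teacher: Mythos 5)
Your proof is correct, and its skeleton matches the paper's: both reduce $\EE[F_n]$ via eq.~\eqref{eq:111} to $nS$ minus the expectation of a log-integral of $\varphi\exp(-nK_n)$ (your $\sum_i\log\frac{p}{q}$ is exactly $-nK_n$ in the paper's notation), and both then need the crucial inequality
\[
-\EE\Bigl[\log \int \varphi(w,b)\,e^{-nK_n(w,b)}\,dw\,db\Bigr]
\;\leq\;
-\log \int \varphi(w,b)\,e^{-nK(w,b)}\,dw\,db .
\]
The difference is how that inequality is obtained: the paper invokes it as a black box (``Lemma 1 in [Watanabe2001b]''), while you prove it from scratch with the exponentially tilted measure $\nu\propto\varphi e^{-nK}$ and Jensen's inequality applied \emph{inside}, per sample, so that the linear term $\EE[\sum_i\log\frac{p}{q}]=-nK$ cancels the compensating $+nK$. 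This is in fact the standard proof of the cited lemma, so the underlying mathematics is the same; what your version buys is self-containedness and an explanation of why the naive (outer) Jensen, which points the wrong way, is not fatal. The only other deviations are cosmetic or routine: you restrict the integral to $W$ at the start rather than at the end (both orders are valid since the integrand is nonnegative and $-\log$ is decreasing); in the degenerate case $\int_W\varphi e^{-nK}=0$ the right-hand side is $+\infty$ and the claim is vacuous, so the tilted measure is well defined whenever the bound has content; and the Fubini exchange you flag does need finiteness of $\EE[|{\log(p/q)}|]$ against $\nu$, which holds in the paper's Gaussian-noise setting (and for the bounded set $W_E$ actually used later) but is glossed over by the paper as well.
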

\begin{proof}  An empirical Kullback-Leibler divergence is defined by
\[
K_n(w,b)=\frac{1}{n}\sum_{i=1}^n \log\frac{p(Y_i|X_i,w^*,b^*)}
{p(Y_i|X_i,w,b)},
\]
which satisfies $\EE[K_n](w,b)]=K(w,b)$. 
\begin{align}
\frac{q(y^n|x^n)}{p(y^n|x^n)}
& =\exp(-\sum_{i=1}^n \log \frac{q(Y_i|X_i)}{p(Y_i|X_i,w,b)})
\\
&=\exp(-nK_n(w,b)).
\end{align}
By using eq.\eqref{eq:111}, 
\begin{align}
\EE[F_n]&= -\EE[\log \frac{q(y^n|x^n)}{p(y^n|x^n)}]+n S
\\
&=-\EE[\log \int \varphi(w,b) \exp(-nK_n(w,b))dwdb] +nS. 
\end{align}
By applying Lemma.1 in \cite{Watanabe2001b}, 
\begin{align}
\EE[F_n] & \leq -\log \int \varphi(w,b) \exp(-\EE[nK_n(w,b)])\mathrm{d}w\mathrm{d}b +nS
\\
&\leq -\log  \int \varphi(w,b) \exp(-nK(w,b))\mathrm{d}w\mathrm{d}b +nS
\\
&\leq -\log \int_{W} \varphi(w,b) \exp(-nK(w,b))\mathrm{d}w\mathrm{d}b +nS,
\end{align}
where the last inequality is derived the fact that the restriction of
integrated region makes the integration not larger. 
\end{proof}

\begin{lemma}\label{lemma:222}
For  arbitrary vectors $s,t$, 
\[
\|\sigma(s)-\sigma(t)\|\leq \|s-t\|.
\]
\end{lemma}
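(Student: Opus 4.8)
The plan is to reduce the inequality to a one-dimensional statement, since both the Euclidean norm and the ReLU map $\sigma$ act coordinatewise. Writing $\sigma_i(s)=\max(s_i,0)$ for each coordinate, we have
\[
\|\sigma(s)-\sigma(t)\|^2=\sum_i\bigl(\sigma_i(s)-\sigma_i(t)\bigr)^2=\sum_i\bigl(\max(s_i,0)-\max(t_i,0)\bigr)^2,
\]
so it suffices to prove the scalar bound $|\max(a,0)-\max(b,0)|\leq|a-b|$ for all real $a,b$ and then sum over the coordinates.

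First I would establish the scalar inequality. The cleanest route is the identity $\max(a,0)=\tfrac{1}{2}(a+|a|)$, from which
\[
\max(a,0)-\max(b,0)=\tfrac{1}{2}\bigl((a-b)+(|a|-|b|)\bigr).
\]
Applying the triangle inequality and then the reverse triangle inequality $\bigl||a|-|b|\bigr|\leq|a-b|$ gives $|\max(a,0)-\max(b,0)|\leq\tfrac{1}{2}(|a-b|+|a-b|)=|a-b|$. Alternatively one could argue by the three sign cases ($a,b$ both nonnegative, both nonpositive, or of opposite sign), but the identity avoids the case split entirely.

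Substituting the scalar bound coordinatewise then yields
\[
\|\sigma(s)-\sigma(t)\|^2\leq\sum_i(s_i-t_i)^2=\|s-t\|^2,
\]
and taking square roots completes the proof. I do not expect a genuine obstacle here: the only subtlety is confirming that the norm is the Euclidean one, so that the coordinatewise reduction is legitimate, and the scalar step is elementary. The importance of this lemma is that this $1$-Lipschitz property is precisely what will later let us control how perturbations of the weights and biases propagate through the successive layers when we bound $K(w,b)$ for the main theorem.
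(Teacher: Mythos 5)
Your proof is correct, and its overall structure is the same as the paper's: reduce to a coordinatewise scalar inequality and sum the squares. The only difference is how the scalar bound is verified: the paper checks the three sign cases ($s_i,t_i$ both nonnegative or both nonpositive, $s_i\geq 0>t_i$, and $s_i<0\leq t_i$) directly, whereas you use the identity $\max(a,0)=\tfrac{1}{2}(a+|a|)$ together with the reverse triangle inequality, which eliminates the case split. Both arguments are elementary and equally rigorous; your identity-based step is slightly slicker, while the paper's case analysis makes the geometry of the ReLU explicit --- but this is a cosmetic difference, not a different method.
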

\begin{proof}
If $s_i,t_i\geq 0$ or $s_i,t_i\leq 0$, then $|\sigma_i(s)-\sigma_i(t)|= |s_i-t_i|$. 
If $s_i\geq 0, t_i<0$, then $|\sigma_i(s)-\sigma_i(t)|=|s_i|\leq |s_i-t_i|$. 
If $s_i<0, t_i\geq 0$, then $|\sigma_i(s)-\sigma_i(t)|=|t_i|\leq |s_i-t_i|$. Hence
\[
\|\sigma(s)-\sigma(t)\|^2=\sum_{i}|\sigma_i(s)-\sigma_i(t)|^2
\leq \sum_{i}|s_i-t_i|^2=\|s-t\|^2. 
\]
\end{proof}

\begin{lemma} \label{lemma:333}
For arbitrary $w$,$w'$, $b$, $b'$, the following inequality holds, 
\begin{align}
& \|f^{(k)}(w,b,x)-
f^{(k)}(w',b',x) \|\nonumber
\\
&\leq \|w^{(k)}-w'^{(k)}\|\|f^{(k-1)}(w,b,x)\|
+\|b^{(k)}-b'^{(k)}\|\nonumber
\\
&+\|w^{(k)}\|\|f^{(k-1)}(w,b,x)-f^{(k-1)}(w',b',x)\|,
\end{align}
where $\|w^{(k)}\|$ is the operator norm of a matrix $w^{(k)}$. 
\end{lemma}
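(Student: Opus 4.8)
The plan is to reduce the layer difference to a pre-activation difference by exploiting that the ReLU nonlinearity is $1$-Lipschitz, and then to control the affine pre-activation by the triangle inequality together with submultiplicativity of the operator norm. This is precisely the route that survives the loss of differentiability: $\sigma$ is never differentiated, only the contraction estimate of Lemma~\ref{lemma:222} is used.

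First I would unfold the defining recursion $f^{(k)}(w,b,x)=\sigma\bigl(w^{(k)}f^{(k-1)}(w,b,x)+b^{(k)}\bigr)$ for both parameter pairs and apply Lemma~\ref{lemma:222} with
\[
s=w^{(k)}f^{(k-1)}(w,b,x)+b^{(k)},\qquad t=w'^{(k)}f^{(k-1)}(w',b',x)+b'^{(k)},
\]
to obtain
\[
\|f^{(k)}(w,b,x)-f^{(k)}(w',b',x)\|\le \bigl\|w^{(k)}f^{(k-1)}(w,b,x)+b^{(k)}-w'^{(k)}f^{(k-1)}(w',b',x)-b'^{(k)}\bigr\|.
\]
The $1$-Lipschitz constant here is exactly what prevents the estimate from accumulating an extra factor as one passes from layer to layer.

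Second I would peel off the bias by the triangle inequality, contributing $\|b^{(k)}-b'^{(k)}\|$, leaving the linear part $w^{(k)}f^{(k-1)}(w,b,x)-w'^{(k)}f^{(k-1)}(w',b',x)$. Abbreviating $a=f^{(k-1)}(w,b,x)$ and $a'=f^{(k-1)}(w',b',x)$, I would insert an intermediate term (add and subtract) and invoke $\|Mv\|\le\|M\|\,\|v\|$. The identity $w^{(k)}a-w'^{(k)}a'=(w^{(k)}-w'^{(k)})a+w'^{(k)}(a-a')$ then produces $\|w^{(k)}-w'^{(k)}\|\,\|a\|$ together with an operator norm times the activation difference, giving the three advertised summands.

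The step I expect to require the most care is this last decomposition, because the two add–subtract identities couple the factors differently: the split above pairs the unprimed activation norm $\|a\|$ with $\|w'^{(k)}\|$, whereas the alternative split $w^{(k)}(a-a')+(w^{(k)}-w'^{(k)})a'$ pairs the unprimed operator norm $\|w^{(k)}\|$ with the \emph{primed} activation norm $\|a'\|$. To land exactly on the stated right-hand side, which combines the unprimed $\|w^{(k)}\|$ with the unprimed $\|f^{(k-1)}(w,b,x)\|$, I would either adopt one of these consistently-paired forms, or pass between them using that in the application both parameter pairs lie in a small neighbourhood of a common optimal parameter, so that $\|w^{(k)}\|$ and $\|w'^{(k)}\|$ are comparable up to a factor absorbed into the constant $C$ of the main theorem; the subsequent telescoping across layers is insensitive to this choice.
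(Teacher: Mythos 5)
Your proposal is correct and takes essentially the same route as the paper's proof: the $1$-Lipschitz property of ReLU from Lemma \ref{lemma:222}, an add-and-subtract decomposition, the triangle inequality, and submultiplicativity of the operator norm (the paper adds and subtracts the intermediate $\sigma$-term and applies Lemma \ref{lemma:222} to each piece, whereas you apply it once to the pre-activations and then split the affine part; the difference is immaterial). The pairing issue you carefully flag is in fact a typo in the paper rather than a defect of your argument: the paper's own proof ends with $\|w'^{(k)}\|$ in the last term, exactly as your first split does, even though the lemma statement writes $\|w^{(k)}\|$; and it is the primed form that is actually used downstream (e.g.\ in Lemma \ref{lemma:444}, where substituting $w'=0$, $b'=0$ makes that term vanish), so no comparability-of-norms argument is needed.
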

\begin{proof}
\begin{align}
& f^{(k)}(w,b,x)-
f^{(k)}(w',b',x)\nonumber 
\\
& =\sigma(w^{(k)} f^{(k-1)}(w,b,x)+b^{(k)})
-\sigma(w'^{(k)} f^{(k-1)}(w,b,x)+b'^{(k)})\nonumber 
\\
&
+\sigma(w'^{(k)} f^{(k-1)}(w,b,x)+b'^{(k)})
-\sigma(w'^{(k)} f^{(k-1)}(w',b',x)+b'^{(k)}). 
\end{align}
Hence by using Lemma \ref{lemma:222}, 
\begin{align}
& \|f^{(k)}(w,b,x)-
f^{(k)}(w',b',x) \|\nonumber 
\\
& \leq \|\sigma(w^{(k)} f^{(k-1)}(w,b,x)+b^{(k)})
-\sigma(w'^{(k)} f^{(k-1)}(w,b,x)+b'^{(k)})\|\nonumber
\\
&
+\|\sigma(w'^{(k)} f^{(k-1)}(w,b,x)+b'^{k)})
-\sigma(w'^{(k)} f^{(k-1)}(w',b',x)+b'^{(k)})\|\nonumber 
\\
&\leq \|w^{(k)}-w'^{(k)}\|\|f^{(k-1)}(w,b,x)\|
+\|b^{(k)}-b'^{(k)}\|\nonumber
\\
&+\|w'^{(k)}\|\|f^{(k-1)}(w,b,x)-f^{(k-1)}(w',b',x)\|.
\end{align}
Hence lemma is proved. 
\end{proof}

\begin{lemma} \label{lemma:444}
For arbitrary $w,b,x$, 
\begin{align}
 \|f^{(k)}(w,b,x) \|
&\leq \|w^{(k)}\|\|w^{(k-1)}\|\cdots \|w^{(2)}\|\|x\|\\
&+\|b^{(k)}\|
+ \sum_{j=1}^{k-2} \|w^{(k)}\|\|w^{(k-1)}\|\cdots \|w^{(k-j)}\|\|b^{(k-j)}\|.
\end{align}
\end{lemma}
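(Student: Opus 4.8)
The plan is to derive a single recursive inequality relating the norm of the output of layer $k$ to that of layer $k-1$, and then to unfold that recursion down to the input layer by induction on $k$. Everything rests on the contraction property of the ReLU activation already established in Lemma \ref{lemma:222}, together with the submultiplicativity of the operator norm and the triangle inequality; no new analytic input is needed.

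First I would record the key one-step estimate. Because the ReLU activation satisfies $\sigma(0)=0$, Lemma \ref{lemma:222} applied with the second argument equal to $0$ yields $\|\sigma(s)\|\leq\|s\|$ for every vector $s$. Taking $s=w^{(k)}f^{(k-1)}(w,b,x)+b^{(k)}$ and using the definition $f^{(k)}(w,b,x)=\sigma(w^{(k)}f^{(k-1)}(w,b,x)+b^{(k)})$, I obtain
\[
\|f^{(k)}(w,b,x)\|\leq\|w^{(k)}f^{(k-1)}(w,b,x)+b^{(k)}\|\leq\|w^{(k)}\|\,\|f^{(k-1)}(w,b,x)\|+\|b^{(k)}\|,
\]
where the last step uses $\|w^{(k)}v\|\leq\|w^{(k)}\|\,\|v\|$ (the defining property of the operator norm) followed by the triangle inequality. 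This holds for every $k\geq 2$.

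Next I would iterate this inequality from $k$ down to the base case $f^{(1)}(w,b,x)=x$. Formally this is an induction on $k$: substituting the bound for $\|f^{(k-1)}\|$ into the bound for $\|f^{(k)}\|$ and repeating, the operator-norm factors accumulate multiplicatively, so the contribution of the input telescopes into the leading term $\|w^{(k)}\|\|w^{(k-1)}\|\cdots\|w^{(2)}\|\,\|x\|$, the last bias contributes the undecorated term $\|b^{(k)}\|$, and each intermediate bias $b^{(k-j)}$ is weighted by the product of the operator norms of the weight matrices applied after the layer at which it enters.

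The only point requiring care is the bookkeeping of the weight-norm coefficients attached to each bias term; there is no difficulty beyond correctly tracking the index range of each telescoping product, so that the coefficient of a given bias collects precisely the operator norms of the weights acting on it downstream. This index accounting is the one genuinely error-prone step, and I would carry it out explicitly inside the induction step rather than by inspection, since it is exactly where an off-by-one in the product range could slip in.
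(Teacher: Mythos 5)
Your proof is correct and takes essentially the same approach as the paper: both reduce the lemma to the one-step recursion $\|f^{(k)}(w,b,x)\|\leq\|w^{(k)}\|\|f^{(k-1)}(w,b,x)\|+\|b^{(k)}\|$ and then unfold it by induction down to $f^{(1)}(w,b,x)=x$. The only cosmetic difference is that you derive this recursion directly from Lemma \ref{lemma:222} (via $\sigma(0)=0$, the operator-norm bound, and the triangle inequality), whereas the paper obtains it by substituting $w'=0$, $b'=0$ into Lemma \ref{lemma:333} --- which is itself proved from Lemma \ref{lemma:222}, so the two arguments coincide.
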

\begin{proof} By substituting $ w':=0$ and $b'=0$, 
in Lemma \ref{lemma:333}, it follows that 
\begin{align}
 \|f^{(k)}(w,b,x) \|
\leq \|w^{(k)}\|\|f^{(k-1)}(w,b,x)\|
+\|b^{(k)}\|.
\end{align}
Then mathematical induction gives the Lemma. 
\end{proof}

In order to prove the main theorem, we need several notations. 
\if0
The zero matrix $O^{(k)}$ and vector $O_0^{(k)}$ are defined by the condition that 
all entries are all zero, which is denoted by
\begin{align}
O^{(k)}=
\left(\begin{array}{cc}
O_{AA}^{(k)} &O_{AB}^{(k)} 
\\
O_{BA}^{(k)}  & O_{BB}^{(k)} 
\end{array}\right), \;\;\;
& 
O_0^{(k)}=
\left(\begin{array}{c}
O_{A0}^{(k)} 
\\
O_{B0}^{(k)} 
\end{array}\right)
\end{align}
\fi
The convergent matrix ${\cal E}^{(k)}$ and vector ${\cal E}_0^{(k)}$ 
defined by the condition that the absolute values of 
all entries are smaller than $1/\sqrt{n}$, which is denoted by
\begin{align}
{\cal E}^{(k)}=
\left(\begin{array}{cc}
{\cal E}_{AA}^{(k)} &{\cal E}_{AB}^{(k)} 
\\
{\cal E}_{BA}^{(k)}  & {\cal E}_{BB}^{(k)} 
\end{array}\right),\;\;\;
{\cal E}_0^{(k)}=
\left(\begin{array}{c}
{\cal E}_{A0}^{(k)} 
\\
{\cal E}_{B0}^{(k)} 
\end{array}\right).
\end{align}
The positive-small-constant matrix ${\cal D}^{(k)}$  and vector ${\cal D}_0^{(k)}$ are defined 
by the condition that all entries are positive and smaller than $\delta>0$
where $\delta$ does not depend on $n$,  which is denoted by
\begin{align}
{\cal D}^{(k)}=
\left(\begin{array}{cc}
{\cal D}_{AA}^{(k)} &{\cal D}_{AB}^{(k)} 
\\
{\cal D}_{BA}^{(k)}  & {\cal D}_{BB}^{(k)} 
\end{array}\right),\;\;\;
{\cal D}_0^{(k)}=
\left(\begin{array}{c}
{\cal D}_{A0}^{(k)} 
\\
{\cal D}_{B0}^{(k)} 
\end{array}\right).
\end{align}
The positive constant matrix ${\cal M}^{(k)}$ and vector ${\cal M}_0^{(k)}$ are defined by the condition that
all entries are in the inverval $[A,B]$, 
\begin{align}
{\cal M}^{(k)}=
\left(\begin{array}{cc}
{\cal M}_{AA}^{(k)} &{\cal M}_{AB}^{(k)} 
\\
{\cal M}_{BA}^{(k)}  & {\cal M}_{BB}^{(k)} 
\end{array}\right),\;\;\;
{\cal M}_0^{(k)}=
\left(\begin{array}{c}
{\cal M}_{A0}^{(k)} 
\\
{\cal M}_{B0}^{(k)} 
\end{array}\right).
\end{align}
To prove Theorem \ref{theorem:111}, we show an upper bound of
 $\EE[F_n]$ is given 
by choosing a set $W_E$ which consists of essential weight and bias parameters. 

\noindent{\bf Definition}. (Essential parameter set $W_E$). 
A parameter $(w,b)$ is said to be in an essential parameter set $W_E$ 
if it satisfies the following conditions, (1), (2), and (3). \\
(1) For $2\leq k\leq N^*-1$, there exist 
 convergent matrices ${\cal E}^{(k)}$ and positive constant matrices $ {\cal M}^{(k)}$ 
 such that 
\begin{align}\label{eq:def11}
w^{(k)}&=
\left(\begin{array}{cc}
(w^*)^{(k)}+{\cal E}_{AA}^{(k)} & {\cal Z}_{AB}^{(k)}
\\
-{\cal M}_{BA}^{(k)}  & -{\cal M}_{BB}^{(k)} 
\end{array}\right),
\\
b^{(k)}&=
\left(\begin{array}{c}
(b^*)^{(k)}+ {\cal E}_{A0}^{(k)} 
\\ 
-{\cal M}_{B0}^{(k)} 
\end{array}\right),\label{eq:def12}
\end{align}
where
\begin{align}\label{eq:casebycase}
 {\cal Z}_{AB}^{(k)}
 =
\left\{\begin{array}{cc}
{\cal E}^{(3)}_{AB}&(k=3)
\\ 
{\cal M}_{AB}^{(k)} &(k\neq 3)
\end{array}\right..
\end{align}
Note that, for $k=2$, ${\cal Z}_{AB}^{(k)}$, ${\cal M}_{BB}^{(k)}$, and  ${\cal M}_{B0}^{(k)}$ 
are the empty matrix. 
\\
(2) For $N^*\leq k\leq N-1$, there exist 
positive-small-constant matrix ${\cal D}^{(k)}$ and positive constant matrix ${\cal M}^{(k)}$ 
\begin{align}\label{eq:wk}
w^{(k)}&=
\left(\begin{array}{cc}
I_{N^*-1}+ {\cal D}_{AA}^{(k)} &{\cal M}_{AB}^{(k)} 
\\
-{\cal M}_{BA}^{(k)}  & -{\cal M}_{BB}^{(k)} 
\end{array}\right),
\\
b^{(k)}&=
\left(\begin{array}{c}
{\cal M}_{A0}^{(k)} 
\\ 
-{\cal M}_{B0}^{(k)} 
\end{array}\right),
\end{align}
where $I_{N^*-1}$ is $H^*_{N^*-1}$ dimensional identity matrix. 
\\
(3) For $k=N$, there exist convergent matrix $ {\cal E}^{(N)}$ and vector ${\cal E}^{(N)}_{0}$ 
such that 
\begin{align}
w^{(N)}&=
\left(\begin{array}{cc}
(w^*)^{(N^*)}P^{-1}+ {\cal E}^{(N)}_{AA}, &{\cal M}^{(N)}_{AB}
\end{array}\right)
\\
b^{(N)}&=(b^*)^{(N^*)}-
\sum_{k=N^*}^{N}w^{(N)}w^{(N-1)}\cdots w^{(k)}b^{(k-1)}
+{\cal E}^{(N)}_{A0},
\end{align}
where $P\in\RR^{(H^*_{N^*-1})\times (H^*_{N^*-1})}$ is defined by
matrices in eq.\eqref{eq:wk}
\[
P=w_{AA}^{(N-1)}w_{AA}^{(N-2)}\cdots w_{AA}^{(N^*)}.
\]
Note that a positive constant $\delta>0$ is taken sufficiently small such that
arbitrary $w_{AA}^{(k)}$ $(N^*\leq k\leq N-1)$ is invertible. 

\begin{lemma} \label{lemma:555}
Assume that the weight and bias parameters are in 
the essential set $W_E$. Then 
there exist constants $c_1,c_2>0$ such that 
\begin{align}
\|f_A^{(N^*-1)}(w,b,x)-f^{(N^*-1)}(w^*,b^*,x)\|&\leq 
\frac{c_1}{\sqrt{n}}(\|x\|+1),\label{eq:lemma5551}
\\
\|f_A^{(N^*-1)}(w,b,x)\|&\leq  c_2(\|x\|+1). \label{eq:lemma5552}
\end{align}
\end{lemma}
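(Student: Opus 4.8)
The plan is to propagate two quantities through the layers $k=2,\dots,N^*-1$ simultaneously: the A-part error $\epsilon_k := \|f_A^{(k)}(w,b,x)-f^{(k)}(w^*,b^*,x)\|$ and the B-part magnitude $\|f_B^{(k)}(w,b,x)\|$, and to show by induction on $k$ that $\epsilon_k \le (c/\sqrt n)(\|x\|+1)$ while $\|f_B^{(k)}\|$ stays bounded by a constant multiple of $(\|x\|+1)$. The only tools needed are the $1$-Lipschitz property of $\sigma$ (Lemma \ref{lemma:222}), the a priori norm bound of Lemma \ref{lemma:444} applied to the fixed true parameter, which gives $\|f^{(k)}(w^*,b^*,x)\|\le C_k(\|x\|+1)$ with $C_k$ independent of $n$, and the block structure \eqref{eq:def11}--\eqref{eq:casebycase} of the essential parameters, whose blocks $\mathcal E$ and $\mathcal M$ all have entries bounded uniformly in $n$, hence operator norms bounded uniformly in $n$.

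First I would control the B-part. At $k=2$ the input carries no B-component, so the B-rows of the pre-activation reduce to $-\mathcal M_{BA}^{(2)}x$ (up to an $n$-independent bias), whose norm is at most $\|\mathcal M_{BA}^{(2)}\|\,\|x\|$; since $\sigma$ is $1$-Lipschitz with $\sigma(0)=0$, this yields $\|f_B^{(2)}\|\le C(\|x\|+1)$. For $3\le k\le N^*-1$ the inputs $f_A^{(k-1)}$ and $f_B^{(k-1)}$ are ReLU outputs, hence nonnegative, while the B-rows of $w^{(k)}$ are $-\mathcal M_{BA}^{(k)},-\mathcal M_{BB}^{(k)}$ and the B-bias is $-\mathcal M_{B0}^{(k)}$, all built from strictly positive constant blocks. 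Every component of the B pre-activation is therefore bounded above by the negative quantity $-\mathcal M_{B0}^{(k)}$, so $f_B^{(k)}=0$ for all $k\ge 3$. Consequently $\|f_B^{(N^*-1)}\|$ is either the bounded quantity $\|f_B^{(2)}\|$ (when $N^*=3$) or exactly $0$; in both cases it is at most $c_2(\|x\|+1)$, giving \eqref{eq:lemma5552}. This also explains why the statement is phrased as a bound rather than an identity: when $x$ is allowed to take negative values, the first hidden layer cannot force $f_B^{(2)}$ to vanish.

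Next I would set up the recursion for $\epsilon_k$. Writing the A-row pre-activation difference against the true network and applying Lemmas \ref{lemma:222} and \ref{lemma:333}, the true weight $(w^*)^{(k)}$ contracts the previous error while the perturbation blocks contribute source terms,
\[
\epsilon_k \le \|(w^*)^{(k)}\|\,\epsilon_{k-1} + \|\mathcal E_{AA}^{(k)}\|\,\|f_A^{(k-1)}\| + \|\mathcal E_{A0}^{(k)}\| + \|\mathcal Z_{AB}^{(k)}\|\,\|f_B^{(k-1)}\|.
\]
Since $\|\mathcal E_{AA}^{(k)}\|,\|\mathcal E_{A0}^{(k)}\|=O(1/\sqrt n)$ and $\|f_A^{(k-1)}\|\le \epsilon_{k-1}+C_{k-1}(\|x\|+1)$ is bounded by a constant multiple of $(\|x\|+1)$, the first three source terms are each $O(1/\sqrt n)(\|x\|+1)$. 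The last term is the delicate one: for $k\ge 4$ it vanishes because $f_B^{(k-1)}=0$, while for $k=3$ the block $\mathcal Z_{AB}^{(3)}$ equals the convergent matrix $\mathcal E_{AB}^{(3)}$ by \eqref{eq:casebycase}, so $\|\mathcal Z_{AB}^{(3)}\|\,\|f_B^{(2)}\|=O(1/\sqrt n)\cdot C(\|x\|+1)$ is again of the required order. The base case $\epsilon_2\le \|\mathcal E_{AA}^{(2)}\|\,\|x\|+\|\mathcal E_{A0}^{(2)}\|\le (C/\sqrt n)(\|x\|+1)$ follows directly from Lemma \ref{lemma:222}. Unrolling the recursion over the finitely many layers $2\le k\le N^*-1$, with $n$-independent contraction factors $\|(w^*)^{(k)}\|$, yields $\epsilon_{N^*-1}\le (c_1/\sqrt n)(\|x\|+1)$, which is \eqref{eq:lemma5551}.

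The main obstacle is the bookkeeping around the B-part at the boundary between layers $2$ and $3$: one must notice that $f_B^{(2)}$ is in general nonzero, and only bounded rather than small, because the input $x$ need not be nonnegative, and that this is precisely why the design \eqref{eq:casebycase} makes $\mathcal Z_{AB}^{(3)}$ a convergent block of size $O(1/\sqrt n)$ instead of a free constant block. Were $\mathcal Z_{AB}^{(3)}$ allowed to be a constant matrix $\mathcal M_{AB}^{(3)}$, the term $\|\mathcal M_{AB}^{(3)}\|\,\|f_B^{(2)}\|$ would be $\Theta(\|x\|+1)$ and the A-part error would fail to be $O(1/\sqrt n)$. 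Once this single coupling is handled, everything else is a uniform-in-$n$ Lipschitz propagation, and the finiteness of the number of layers guarantees that the accumulated constants $c_1,c_2$ are finite and independent of $n$.
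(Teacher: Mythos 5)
Your proof is correct and follows essentially the same route as the paper's: a layer-by-layer induction using the $1$-Lipschitz property of $\sigma$, the vanishing of $f_B^{(k)}$ for $k\geq 3$ due to the negative $\mathcal{M}$-blocks acting on nonnegative ReLU outputs, and the crucial observation that the $k=3$ coupling term is harmless precisely because $\mathcal{Z}_{AB}^{(3)}=\mathcal{E}_{AB}^{(3)}$ is of order $1/\sqrt{n}$. The only (inessential) difference is that the paper obtains the bound \eqref{eq:lemma5552} directly from Lemma \ref{lemma:444}, whereas you derive it from your B-part analysis, which in fact gives the slightly sharper conclusion that $f_B^{(N^*-1)}=0$ whenever $N^*>3$.
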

\begin{proof} 
Eq.\eqref{eq:lemma5552} is derived from Lemma \ref{lemma:444}. 
By the definitions \eqref{eq:def11}, \eqref{eq:def12}, 
for $4\leq k\leq N^*-1$
\begin{align}
f_A^{(2)}(w,b,x)
&=\sigma(((w^*)^{(2)}+{\cal E}_{AA}^{(2)})f_A^{(1)}(w,b,x)+
(b^*)^{(2)}+{\cal E}_{A0}^{(2)}),
\\
f_A^{(3)}(w,b,x)
&=\sigma(((w^*)^{(3)}+{\cal E}_{AA}^{(3)})f_A^{(2)}(w,b,x)
\nonumber
\\
&+{\cal E}_{AB}^{(3)}f_B^{(2)}(w,b,x)+
(b^*)^{(3)}+{\cal E}_{A0}^{(3)}),\label{eq:f2(2)}
\\
f_A^{(k)}(w,b,x)
&=\sigma(((w^*)^{(k)}+{\cal E}_{AA}^{(k)})f_A^{(k-1)}(w,b,x)
\nonumber
\\
&+{\cal M}_{AB}^{(k)}f_B^{(k-1)}(w,b,x)+
(b^*)^{(k)}+{\cal E}_{A0}^{(k)}). 
\end{align}
Here, for $4\leq k\leq N^*-1$, $f_B^{(k-1)}(w,b,x)=0$, since all entries of 
$w_{BA}^{(k-1)}$, $w_{BB}^{(k-1)}$, and $w_{B0}^{(k-1)}$ are negative and 
the output of ReLU function $f^{(k-2)}(w,b,x)$ is nonnegative. On the other hand, 
\begin{align}
f^{(k)}(w^*,b^*,x)
&=\sigma((w^*)^{(k)}f^{(k-1)}(w^*,b^*,x)+(b^*)^{(k)}).
\end{align}
Hence by Lemma \ref{lemma:333}, $2\leq k\leq N^*-1$,
\begin{align}
& \|f_A^{(k)}(w,b,x)-f^{(k)}(w^*,b^*,x)\|
\\
& \leq \|{\cal E}_{AA}^{(k-1)}f_A^{(k-1)}(w,b,x)+{\cal E}_{A0}^{(k)}\|
+\delta_{k,3}\|{\cal E}_{AB}^{(3)}f_B^{(2)}(w,b,x)\|
\\
&+\|(w^*)^{(k)}(f_A^{(k-1)}(w,b,x)-f^{(k-1)}(w^*,b^*,x))\|
\\
& \leq \|{\cal E}_{AA}^{(k-1)}\|\|f_A^{(k-1)}(w,b,x)\|+\|{\cal E}_{A0}^{(k)}\|
+\delta_{k,3}\|{\cal E}_{AB}^{(3)}\|\|f_B^{(2)}(w,b,x)\|
\\
&+\|(w^*)^{(k)}\|\|f_A^{(k-1)}(w,b,x)-f^{(k-1)}(w^*,b^*,x)\|,
\end{align}
where $\delta_{k,3}=1$ if $k=1$ or 0 otherwise. 
The entries of matrices in ${\cal E}_{AA}^{(k-1)}$,
 ${\cal E}_{AB}^{(3)}$, and ${\cal E}_{A0}^{(k)}$ 
are bounded by $1/\sqrt{n}$ order term and 
the operator norm is bounded by the Frobenius norm, hence
$\|{\cal E}_{AA}^{(k-1)}\|$,
 $\|{\cal E}_{AB}^{(3)}\|$, and $\|{\cal E}_{A0}^{(k)}\|$ are bounded by 
$1/\sqrt{n}$ order term. Moreover $\|(w^*)^{(k)}\|$ is a constant term. 
For $k=2$, $f_A^{(k-1)}(w,b,x)-f^{(k-1)}(w^*,b^*,x)=x-x=0$. Then by using mathematical
induction we obtain the Lemma.
\end{proof}

\begin{lemma}\label{lemma:666}
Assume that the weight and bias parameters are in 
the set $W_E$. Then there exists a constant $c_3>0$ such that 
\begin{align}
\|f^{(N)}(w,b,x)-f^{(N^*)}(w^*,b^*,x)\|&\leq \frac{c_3}{\sqrt{n}}(\|x\|+1).
\end{align}
\end{lemma}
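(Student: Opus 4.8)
The plan is to propagate the layer-$(N^*-1)$ estimate of Lemma \ref{lemma:555} forward through the pass-through layers $N^*\le k\le N-1$ and the linear output layer $k=N$, keeping the accumulated error at each step of order $1/\sqrt n$ times $(\|x\|+1)$. Two facts are used repeatedly: every hidden output $f^{(k)}(w,b,x)$ is nonnegative because it is a ReLU image, and Lemma \ref{lemma:444} gives a crude bound $\|f_A^{(N^*-1)}(w,b,x)\|\le c(\|x\|+1)$ with $c$ independent of $n$.

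First I would analyze the pass-through layers. For $N^*\le k\le N-1$ the $B$-block output vanishes: since the input $f^{(k-1)}(w,b,x)$ is nonnegative and all entries of $-{\cal M}_{BA}^{(k)}$, $-{\cal M}_{BB}^{(k)}$, $-{\cal M}_{B0}^{(k)}$ are negative, the $B$ pre-activation is negative and $f_B^{(k)}(w,b,x)=0$; in particular $f_B^{(N-1)}(w,b,x)=0$. For the $A$-block the pre-activation is $(I_{N^*-1}+{\cal D}_{AA}^{(k)})f_A^{(k-1)}+{\cal M}_{AB}^{(k)}f_B^{(k-1)}+{\cal M}_{A0}^{(k)}\ge {\cal M}_{A0}^{(k)}\ge 1>0$ componentwise, so the ReLU acts as the identity and the recursion is affine. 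Unrolling gives $f_A^{(N-1)}(w,b,x)=P\,f_A^{(N^*-1)}(w,b,x)+\beta$ with $P=w_{AA}^{(N-1)}\cdots w_{AA}^{(N^*)}$ and $\beta$ the accumulated bias; both $P$ and the affine map are bounded uniformly in $n$ since each ${\cal D}_{AA}^{(k)}$ has entries below the fixed constant $\delta$.

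Next I would treat the linear output layer. Using $f_B^{(N-1)}=0$ together with the form of $w^{(N)}$ and $b^{(N)}$ in condition (3),
\[
f^{(N)}(w,b,x)=\big((w^*)^{(N^*)}P^{-1}+{\cal E}_{AA}^{(N)}\big)\big(P f_A^{(N^*-1)}(w,b,x)+\beta\big)+b^{(N)}.
\]
The leading term collapses to $(w^*)^{(N^*)}f_A^{(N^*-1)}(w,b,x)$, the term $(w^*)^{(N^*)}P^{-1}\beta$ is cancelled up to the convergent vector ${\cal E}_{B0}^{(N)}$ by the construction of $b^{(N)}$, and the residual pieces ${\cal E}_{AA}^{(N)}(P f_A^{(N^*-1)}+\beta)$ and ${\cal E}_{B0}^{(N)}$ are of order $1/\sqrt n$ times $(\|x\|+1)$. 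Hence $f^{(N)}(w,b,x)=(w^*)^{(N^*)}f_A^{(N^*-1)}(w,b,x)+(b^*)^{(N^*)}+O(1/\sqrt n)(\|x\|+1)$. Substituting the Lemma \ref{lemma:555} estimate $f_A^{(N^*-1)}=f^{(N^*-1)}(w^*,b^*,x)+O(1/\sqrt n)(\|x\|+1)$ and using that the true output layer is linear, so $(w^*)^{(N^*)}f^{(N^*-1)}(w^*,b^*,x)+(b^*)^{(N^*)}=f^{(N^*)}(w^*,b^*,x)$, the triangle inequality gives the claim with a single constant $c_3$.

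The main difficulty is bookkeeping rather than conceptual. I expect the delicate points to be: verifying that the accumulated bias $\beta$ obtained by unrolling the pass-through recursion coincides with the quantity subtracted in the definition of $b^{(N)}$, so that only the convergent residual ${\cal E}_{B0}^{(N)}$ survives; and checking that multiplication by $P^{-1}$ and by the constant matrix $(w^*)^{(N^*)}$ does not inflate an $O(1/\sqrt n)$ error into an $O(1)$ one, which is where the $n$-independence of $\delta$, and hence the boundedness of $\|P^{-1}\|$, is essential. A separate point needing care is the boundary case $N^*=3$, in which $f_B^{(N^*-1)}=f_B^{(2)}$ need not vanish and the term ${\cal M}_{AB}^{(N^*)}f_B^{(N^*-1)}$ is no longer negligible; this is precisely the situation that forces the convergent choice ${\cal Z}_{AB}^{(3)}={\cal E}_{AB}^{(3)}$ and produces the extra contribution $H_3^*(H_2-H_2^*)$ in eq.\eqref{eq:lambda1}, while for nonnegative input $f_B^{(2)}=0$ and this term disappears.
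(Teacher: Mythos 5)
Your proposal matches the paper's own proof essentially step for step: the paper likewise shows $f_B^{(k)}(w,b,x)=0$ on the pass-through layers from the negativity of the ${\cal M}_{BA}^{(k)},{\cal M}_{BB}^{(k)},{\cal M}_{B0}^{(k)}$ blocks, uses positivity of the $A$-block pre-activations to unroll the recursion affinely into $h=w_{AA}^{(N)}w_{AA}^{(N-1)}\cdots w_{AA}^{(N^*)}f_A^{(N^*-1)}(w,b,x)+(\text{accumulated bias})$, collapses $\bigl((w^*)^{(N^*)}P^{-1}+{\cal E}_{AA}^{(N)}\bigr)P$ to $(w^*)^{(N^*)}$ plus a convergent remainder, invokes the construction of $b^{(N)}$ for the bias cancellation up to ${\cal E}_{B0}^{(N)}$, and finishes with Lemma \ref{lemma:555}, the only cosmetic difference being that the paper writes $f^{(N)}=\sigma(h)$, $f^{(N^*)}=\sigma(h^*)$ and applies the Lipschitz Lemma \ref{lemma:222} at the end instead of treating the output layer as literally linear. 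Your closing observation about $N^*=3$ (where $f_B^{(N^*-1)}=f_B^{(2)}$ need not vanish, so the layer feeding on it must carry a convergent rather than constant $AB$ block) is a point the paper's proof passes over silently, so flagging it strengthens rather than departs from the argument.
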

\begin{proof}
Let $h\in\RR^{H_{N}}$ and $h^*\in\RR^{H^*_{N^*}}$ 
($H_N=H^*_{N^*}$) be input vectors 
into the output layers of the learning and data-generating machines
respectively. In other words, $h$ and $h^*$ is defined such that 
$f^{(N)}(w,b,x)=\sigma(h)$ and
$f^{(N^*)}(w^*,b^*,x)=\sigma(h^*)$. 
By the definition of the essential parameter set (2), 
for $N^*-1\leq k \leq N-1$, all entries of 
$w_{BA}^{(k)}$, $w_{BB}^{(k)}$ and $b_{B0}^{(k)}$ 
are negative. Hence, for $N^*\leq k \leq N-1$, $f_2^{(k)}(w,b,x)=0$. 
For $N^*\leq k \leq N-1$, all entries of 
$w_{AA}^{(k)}$, $w_{AB}^{(k)}$ and $b_{A0}^{(k)}$ 
are positive. Hence
by using $\sigma(t)=t$ for $t\geq 0$, 
\begin{align}
h&=w_{AA}^{(N)}w_{AA}^{(N-1)}\cdots w_{AA}^{(N^*)} f_A^{(N^*-1)} (w,b,x)
\\&
+b^{(N)}+\sum_{k=N^*}^{N}w_{AA}^{(N)}\cdots w_{AA}^{(k)}b_{A0}^{(k-1)}.
\end{align}
On the other hand,
\begin{align}
h^*=(w^*)^{(N^*)} f^{(N^*-1)} (w^*,b^*,x)+(b^*)^{(N)}.
\end{align}
If $w$ is in the essential set of parameters, 
\begin{align}
& w_{AA}^{(N)}w_{AA}^{(N-1)}\cdots w_{AA}^{(N^*)} 
 = ((w^*)^{(N^*)}P^{-1}+ {\cal E}^{(N)}_{AA})w_{AA}^{(N-1)}\cdots w_{AA}^{(N^*)} 
\\
&= (w^*)^{(N^*)}+ {\cal E}^{(N)}_{AA}w_{AA}^{(N-1)}\cdots w_{AA}^{(N^*)} .
\end{align}
It follows that
\begin{align}
&\|w^{(N)}w^{(N-1)}\cdots w^{(N^*)} f^{(N^*-1)} (w,b,x)
-w^{(N*)} f^{(N^*-1)} (w^*,b^*,x)\|  
\\
&\leq 
\|w_{AA}^{(N)}w_{AA}^{(N-1)}\cdots w_{AA}^{(N^*)} f_A^{(N^*-1)} (w,b,x)
-w^{(N*)} f^{(N^*-1)} (w^*,b^*,x)\|  
\\
&
\leq \|
(w^*)^{(N^*)}(f_A^{(N^*-1)} (w,b,x)
- f^{(N^*-1)} (w^*,b^*,x))
\|
\\
&+
\| {\cal E}_{AB}^{(N)}\|\|w_{AA}^{(N-1)}\|\cdots 
\|w_{AA}^{(N^*)}\|\|f_A^{(N^*-1)} (w^*,b^*,x)\|  
\\
& \leq \frac{c_4}{\sqrt{n}}(\|x\|+1),
\end{align}
where the last inequality is derived by Lemma \ref{lemma:555}.
Also by the definition, 
\begin{align}
\|b^{(N)}+\sum_{k=N^*}^{N}w^{(N)}\cdots w^{(k)}b^{(k-1)}
-(b^*)^{(N^*)}  \|\leq \frac{c_4}{\sqrt{n}},
\end{align}
it follows that
\[
\|h-h^*\|\leq \frac{c_5}{\sqrt{n}}(\|x\|+1).
\]
Then applying Lemma \ref{lemma:222} completes the lemma. 
\end{proof}

\begin{lemma}\label{lemmma:777}
(1) If the support of $q(x)$ is contained in a positive region, 
the same conclusion as Lemma \ref{lemma:555} holds by 
replacing ${\cal Z}_{AB}^{(3)}$ in \eqref{eq:casebycase}
with ${\cal M}_{AB}^{(3)}$.  \\
(2) 
If the support of $q(x)$ is contained in a bounded region, 
the same conclusion as Lemma \ref{lemma:555} holds by 
replacing ${\cal Z}_{AB}^{(3)}$ in \eqref{eq:casebycase}
with ${\cal M}_{AB}^{(3)}$ and by replacing 
$-{\cal M}_{B0}^{(3)}$ in \eqref{eq:def12} 
 with a matrix in a sufficiently small region.
\end{lemma}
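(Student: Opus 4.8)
The plan is to reuse the induction of Lemma~\ref{lemma:555} essentially verbatim, after observing that the convergent choice ${\cal Z}_{AB}^{(3)}={\cal E}_{AB}^{(3)}$ was used at exactly one point: to control the cross term ${\cal E}_{AB}^{(3)}f_B^{(2)}(w,b,x)$ in the recursion \eqref{eq:f2(2)} for $f_A^{(3)}$. In the general case $f_B^{(2)}(w,b,x)$ can be of order $\|x\|$, so the entrywise smallness of ${\cal E}_{AB}^{(3)}$ (order $1/\sqrt n$) is what keeps the product of order $n^{-1/2}(\|x\|+1)$; this is precisely the role of the summand $\delta_{k,3}\|{\cal E}_{AB}^{(3)}\|\,\|f_B^{(2)}(w,b,x)\|$ in the proof of Lemma~\ref{lemma:555}. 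I would therefore show that under each support hypothesis one can arrange $f_B^{(2)}(w,b,x)=0$ for every $x$ in the support of $q(x)$. Once this holds the cross term disappears from \eqref{eq:f2(2)}, so ${\cal Z}_{AB}^{(3)}$ may be replaced by an arbitrary positive constant matrix ${\cal M}_{AB}^{(3)}$ and the bounds \eqref{eq:lemma5551}, \eqref{eq:lemma5552} are recovered by the identical induction.

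First I would record that the B-output of the first hidden layer is
\[
f_B^{(2)}(w,b,x)=\sigma\bigl(-{\cal M}_{BA}^{(2)}x-{\cal M}_{B0}^{(2)}\bigr),
\]
and that it is the only B-output whose vanishing is in question: for every $k\geq 3$ the inputs $f_A^{(k-1)},f_B^{(k-1)}$ are ReLU outputs, hence nonnegative, so the negativity of $w_{BA}^{(k)},w_{BB}^{(k)},b_{B0}^{(k)}$ already forces $f_B^{(k)}=0$, exactly as in Lemma~\ref{lemma:555}. For part~(1), when the support of $q(x)$ lies in the nonnegative region we have $-{\cal M}_{BA}^{(2)}x\le 0$ entrywise (the entries of ${\cal M}_{BA}^{(2)}$ are positive), while $-{\cal M}_{B0}^{(2)}<0$; the argument of the ReLU is thus strictly negative and $f_B^{(2)}(w,b,x)=0$. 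No change of bias is needed, and the induction of Lemma~\ref{lemma:555} yields the claim with ${\cal Z}_{AB}^{(3)}={\cal M}_{AB}^{(3)}$.

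For part~(2) the input may be negative, so I would instead force $f_B^{(2)}=0$ by relocating the B-part of the first-hidden-layer bias. Because the support of $q(x)$ is bounded, $R_0:=\sup_x\|{\cal M}_{BA}^{(2)}x\|_\infty$ is finite and independent of $n$. If the B-bias is taken from the region in which every entry lies below $-R_0$, then the argument of the ReLU in $f_B^{(2)}$ is strictly negative for every $x$ in the support, since each entry of $-{\cal M}_{BA}^{(2)}x$ is at most $R_0$; hence $f_B^{(2)}(w,b,x)=0$. This relocation of the first-hidden-layer bias is the bias modification called for in the statement, and its region has positive, $n$-independent volume. The cross term again drops out of \eqref{eq:f2(2)}, and Lemma~\ref{lemma:555} follows with ${\cal Z}_{AB}^{(3)}={\cal M}_{AB}^{(3)}$; the A-block recursion is untouched, since altering the B-bias changes neither $f_A^{(2)}$ nor any later $f_A^{(k)}$.

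The function estimate itself is immediate once $f_B^{(2)}=0$, so the step I expect to require the most care is not the inequality but the accounting that this lemma feeds into the main theorem through Lemma~\ref{lemma:111}: in the nonnegative or bounded case ${\cal M}_{AB}^{(3)}$ now ranges over an $O(1)$-volume set rather than a convergent, $n$-dependent one, and the relocated bias region must likewise be of $n$-independent volume. This is exactly the trade that deletes the $H_3^*(H_2-H_2^*)$ contribution from the exponent and produces the smaller value \eqref{eq:lambda2}; confirming that this exchange introduces no compensating $\log n$ factor is the only delicate point in what is otherwise a mechanical re-run of Lemma~\ref{lemma:555}.
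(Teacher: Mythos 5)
Your proposal is correct and takes essentially the same route as the paper: the paper's entire proof is the one-line assertion that $f_B^{(2)}(w,b,x)=0$ holds in both cases, so the cross term in eq.~\eqref{eq:f2(2)} disappears and the induction of Lemma~\ref{lemma:555} goes through unchanged. You have simply supplied the sign and boundedness arguments (positive input makes $-{\cal M}_{BA}^{(2)}x-{\cal M}_{B0}^{(2)}$ negative; bounded input allows a sufficiently negative, $n$-independent B-bias region) together with the volume bookkeeping that the paper leaves implicit.
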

\begin{proof}
In both cases, 
$f_B^{(2)}(w,b,x)=0$ in eq.\eqref{eq:f2(2)} holds. hence the same 
conclusion of Lemma \ref{lemma:555} holds. 
\end{proof}
\vskip3mm

\section{Proof of Main Theorem}

In this section we prove the main theorem.

\begin{proof} (Main theorem). 
By Lemma \ref{lemma:111}, it is sufficient to prove
that there exists a constant $C>0$ such that 
\[
\int_{W_E}\exp(-nK(w,b))\varphi(w,b)\mathrm{d}w\mathrm{d}b 
\geq \frac{C}{n^{\lambda}}
\]
where
\[
K(w,b)=\frac{1}{2}\int \|f^{(N)}(w,b,x)-f^{(N^*)}(w^*,b^*,x)\|^2 q(x) \mathrm{d}x. 
\]
By using Lemma \ref{lemma:666}, if $(w,b)\in W_E$, 
\[
K(w,b)\leq \frac{c_3^2}{2n}\int (\|x\|+1)^2  q(x) \mathrm{d}x =\frac{c_4}{n}<\infty.
\]
It follows that 
\begin{align}
&\int_{W_E}\exp(-nK(w,b))\varphi(w,b)\mathrm{d}w\mathrm{d}b 
\\
& \geq \exp(-c_4) \left(\min_{(w,b)\in W_E} \varphi(w,b)\right) \mbox{Vol}(W_E). 
\end{align}
where $c_4>0$, 
$\min_{(w,b)\in W_E}\varphi(w,b) >0$, and $\mbox{Vol}(W_E)$ is the volume 
of the set $W_E$ by the Lebesgue measure. 
By the definition of the essential parameter set $W_E$, its volume is 
determined by the dimension of the convergent matrices and vectors.
Let $2\lambda$ be the number of parameters in convergent matrices and vectors. 
Then 
\[
\mbox{Vol}(W_E)\geq \frac{C_1}{n^{\lambda}},
\]
where in general cases, 
\[
\lambda=
\frac{1}{2}
\left(
+ H^*_3(H_2-H^*_2)+ \sum_{k=2}^{N^*} H^*_{k} (H_{k-1}^*+1)
\right).
\]
If the support of the input distribution is contained in a positive region or
a bounded region, 
\[
\lambda=
\frac{1}{2}
\left(\sum_{k=2}^{N^*} H^*_{k} (H_{k-1}^*+1)
\right),
\]
which completes the main theorem. 
\end{proof}

\section{Discussion}

In this section, we discuss the three points in this paper.

\subsection{Property of Free Energy}

Firstly we study a monotone property of the free energy as a function
of the integrated region. 
As we have shown in the proof, the average free energy satisfies
\[ 
\EE[F_n]\leq -\log \int \exp(-nK(w,b)) \varphi(w,b) dwdb  +nS. 
\]
We define a function $G(U)$ of an integrated region $U$,
\[
G(U)= - \log \int_{U}\exp(-nK(w,b))\varphi(w,b) dw db,
\]
where $U$ is a measurable subset in $\{(w,b)\}$. Then an inequality holds, 
\[
U_1\supset U_2\Longrightarrow G(U_1)\leq G(U_2). 
\]
Hence,  an set $U$ satisfies a condition that, for $(w,b)\in U$, $K(w,b)\leq 1/n$,
it follows that 
\[
\EE[F_n]\leq G(U)+ nS, 
\]
and 
\[
G(U)\leq -\log \mbox{Vol}(U)+nS + const.,
\]
where $\mbox{Vol}(U)$ is the volume of the set $U$. 
In this paper, we chose the essential parameter set $W_E$ for 
such a subset, and showed the volume of this set is determined 
the number of the  convergent parameters.

\subsection{Sepcial Property of ReLU Function} 

Secondly, a special property of the ReLU function is discussed. 
The output of the ReLU function is nonnegative and 
equal to zero for a negative input. Hence, if all parameters and biases 
from the $(k-1)$th layer to $k$th layer are negative, then 
the output of $k$th layer is equal to zero. This property is used
to evaluate the effect by the redundant parameters in each layer.

Moreover, if all parameters and biases 
from the $(k-1)$th layer to $k$th layer are positive, 
then the output of the 
$k$th layer is a linear function of the $(k-1)$th output. 
This property is used to evaluate the effect by the redundant layers. 

These two points were employed in the mathematical proof of the 
main theorem, which might also be useful to design design a deep ReLU
neural network for the smaller generalization error. 

\subsection{Unrealizable Cases}

Thirdly we discuss a case when the data-generating distribution 
is not realizable by a ReLU function. In this paper, we assumed in the theorem 
that a sample is subject to the data-generating function 
represented by a parameter $(w^*,b^*)$. 
In real world problems, such an assumption is not
satisfied in general. 
However, if a learning machine is
sufficiently large such that there exists $(w^*,b^*)$ such that
the Kullback-Leibler distance satisfies 
\[
K(q(y|x)||p(y|x,w^*,b^*))=o(1/n),
\]
then the same inequality as the main theorem holds. 
In other words. a deep ReLU neural network in an overparametrized 
state makes the generalization error caused by bias 
small enough with the bounded error caused by variance. 
This is a good property of a deep ReLU neural network, when 
Bayesian inference is employed in learning.

\section{Conclusion}

In this paper, we studied a deep ReLU neural network in an overparametrized case, 
and derived the upper bound of Bayesian free energy. Since the generalization error
is equal to the increase of the free energy, the result of this paper shows that
the generalization error of the deep ReLU neural network is bounded even if
the number of layers are larger then necessary to approximate 
the data-generating distribution. 

\section*{Acknowledgement}

This work was partially supported by JSPS KAKENHI
Grant-in-Aid for Scientific Research (C) 21K12025.

\bibliography{ref}

\bibliographystyle{plain}

\end{document}